\newtheorem{theorem}{Theorem}
\newtheorem{lemma}{Lemma}
\title{\LARGE \bf
Adaptive Ankle Torque Control for Bipedal Humanoid Walking on Surfaces with Unknown Horizontal and Vertical Motion
}
\author{Jacob Stewart$^{*1}$, I-Chia Chang$^{*2}$, Yan Gu$^{\dagger2}$, and Petros A. Ioannou$^{1}$  
    \thanks{$^{*}$These authors contributed equally. $^{^{\dagger}}$Corresponding author.}
    \thanks{$^{1}$Jacob Stewart and Petros A. Ioannou are with the Viterbi School of Engineering, University of Southern California, Los Angeles, CA 90007.
            {\tt \footnotesize \{jacobmst, ioannou\}@usc.edu.}}%
    \thanks{$^{2}$I-Chia Chang and Yan Gu are with the School of Mechanical Engineering, Purdue University,
            West Lafayette, IN 47907, USA. E-mails: {\tt \footnotesize \{chang970, yangu\}@purdue.edu.}}
}
\begin{document}

\maketitle
\thispagestyle{plain}
\pagestyle{plain}

\begin{abstract}
Achieving stable bipedal walking on surfaces with unknown motion remains a challenging control problem due to the hybrid, time-varying, partially unknown dynamics of the robot and the difficulty of accurate state and surface motion estimation. 
Surface motion imposes uncertainty on both system parameters and non-homogeneous disturbance in the walking robot dynamics.
In this paper, we design an adaptive ankle torque controller to simultaneously address these two uncertainties and propose a step-length planner to minimize the required control torque. 
Typically, an adaptive controller is used for a continuous system. To apply adaptive control on a hybrid system such as a walking robot, an intermediate command profile is introduced to ensure a continuous error system. 
Simulations on a planar bipedal robot, along with comparisons against a baseline controller, demonstrate that the proposed approach effectively ensures stable walking and accurate tracking under unknown, time-varying disturbances.
\end{abstract}

\section{INTRODUCTION}

Bipedal robots have shown their ability to walk stably on stationary surfaces using either model-based \cite{973365, xiong20223} or data-driven \cite{dai2023data, castillo2024data} methods. 
However, locomotion control on moving surfaces remains challenging since surface motion induces unknown, time-varying disturbance in the nonlinear and hybrid dynamics of a walking robot \cite{gao2023time}.
One challenge of this problem comes from the difficulty of state estimation, where the assumption of the stationary foot-ground contact points does not hold~\cite{he2024legged}. 
In terms of controller design, the surface motion can be treated as a known disturbance input \cite{gao2023time} or an unknown external disturbance \cite{gu2024walking}. It can also be assumed to have bounded acceleration in the vertical \cite{iqbal2023asymptotic} or horizontal directions \cite{chen2024contingency}.
Yet, previous control methods that individually deal with vertical~\cite{iqbal2020provably, iqbal2023analytical,iqbal2023asymptotic} or horizontal~\cite{gao2023time, chen2024contingency} motions cannot be directly extended to address unknown motions in both directions.
In the reduced-order dynamics model of a robot walking on moving surfaces, the vertical surface motion induces parametric model uncertainty~\cite{iqbal2023asymptotic}, while the horizontal surface motion causes non-homogeneous disturbance~\cite{gao2023time}.
Previous work addressing vertical ground motions either assumes an accurately known surface movement profile~\cite{iqbal2020provably, iqbal2023analytical} or ignores the forcing term in the tracking error dynamics induced by horizontal ground motions~\cite{iqbal2023asymptotic}, which may not be effective for minimizing tracking errors.
Meanwhile, existing control approaches for horizontal ground movements~\cite{gao2023time} assume the surface exhibits an exactly periodic, accurately known motion, and do not explicitly address the parametric uncertainties~\cite{chen2024contingency} or disturbances caused by unknown ground translations. 
Currently, controllers that explicitly treat both unknown vertical and horizontal surface motions are missing. 

Adaptive control has been used in a variety of industrial applications to reject disturbances with unknown parameters affecting systems with either known or unknown dynamics.
In the case of a bipedal robot walking on a moving surface, there is often very limited knowledge of the characteristics of the surface movement beyond it is a sum of sinusoidal components with time-varying parameters \cite{han2021data,tannuri2003estimating,zhang2020ship}.
Though some adaptive control schemes can reject disturbances with very minimal knowledge of the system dynamics \cite{jafari2017overparameterized,kinney2009robust}, adaptive controllers which use the available knowledge of the system dynamics typically perform more effectively and have a lower computation complexity \cite{landau2013benchmark}.
These methods are robust to measurement noise and model uncertainty \cite{jafari2016contMimo} and can be tuned to be effective in the presence of time variations \cite{stewart2024ltiSystem,stewart2024ltvSystem}.
However, adaptive controllers are typically suitable for continuous systems, and thus may not be directly applicable to walking robots, whose dynamics are inherently hybrid~\cite{xiong20223} due to the robot's feet making and breaking contact with the ground.
Although adaptive control has recently been extended to hybrid systems including walking robots~\cite{gu2021adaptive}, the approach focuses on time-invariant robot dynamics rather than time-varying behaviors associated with walking during ground motion. 

This paper introduces an adaptive ankle torque control approach that achieves stable bipedal walking during unknown horizontal and vertical ground motions.
The main contributions of this study are:
\begin{enumerate}
    \item [(a)] Introduction of a fictitious hybrid error system based on the modification of the desired robot state, which, despite the hybrid nature of walking robot dynamics, renders a continuous error system, allowing the direct application of existing adaptive controllers typically designed for continuous systems. This method expands the range of applications of adaptive control.
    \item [(b)] Construction of an adaptive ankle torque controller that explicitly treats the unknown horizontal and vertical ground motions. This controller not only ensures walking stability but also enables accurate position tracking within the continuous phase of each walking step. Such a performance is not achievable with previous footstep controllers \cite{gao2023time, iqbal2023asymptotic} because they leave the continuous-phase robot dynamics uncontrolled and unstable. 
    \item [(c)] Development of a discrete footstep controller for the proposed fictitious error system, which stabilizes the system and greatly reduces the demanded magnitude of the proposed ankle torque controller. 
    \item [(d)] Validation of the proposed control approach through simulations of a seven-link bipedal robot under various surface motion uncertainties.
\end{enumerate}



\vspace{-0.05 in}
\section{PROBLEM FORMULATION}

\begin{figure}
    \centering
    \includegraphics[width= 0.45\textwidth]
    {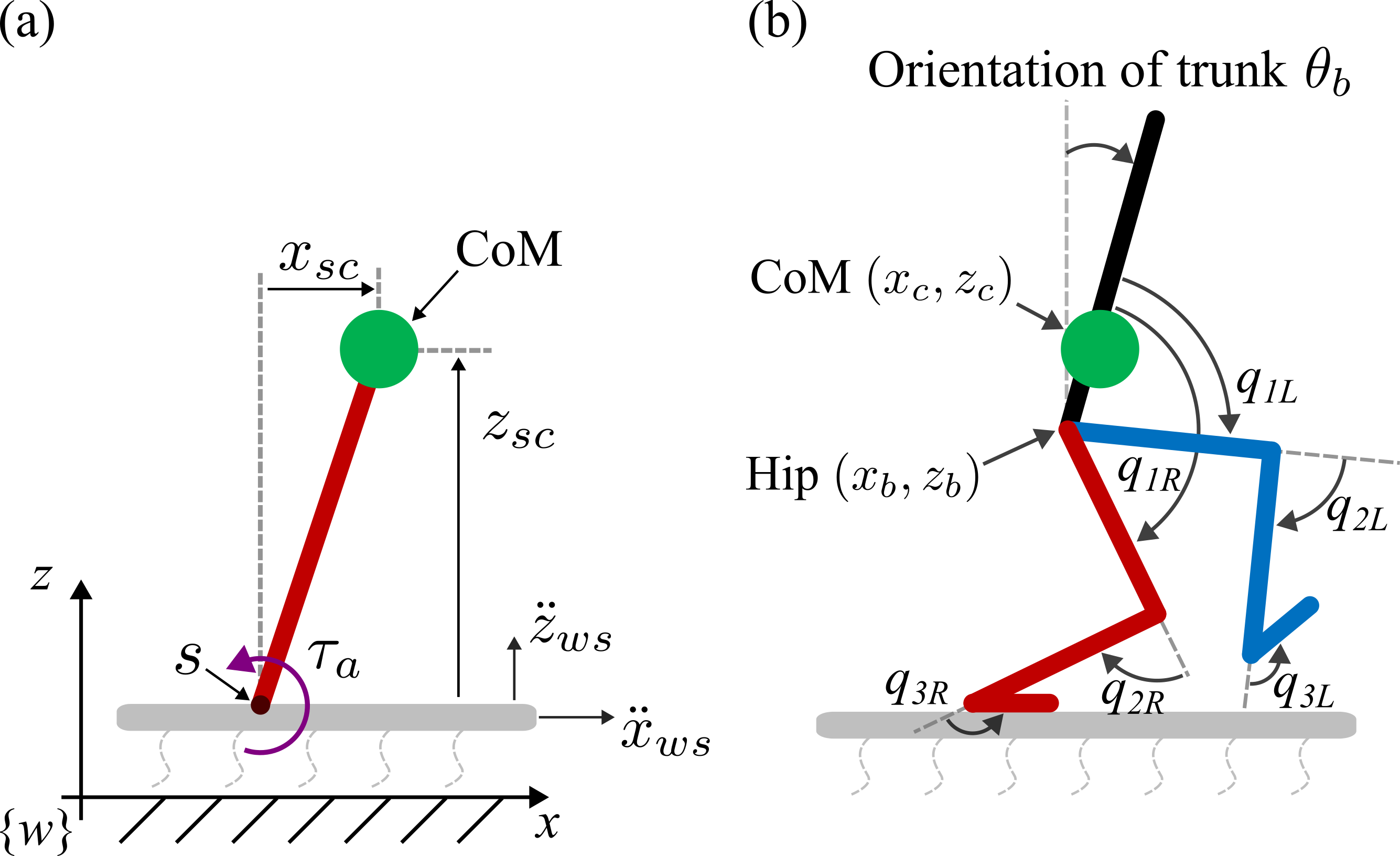}
    \vspace{-0.1 in}
    \caption{(a) A linear inverted pendulum model on a platform moving along the $x$ and $z$ axes of the world frame \{$w$\}. (b) A seven-link robot whose state is $\mathbf{q} = [\mathbf{x}_{base}^T, \mathbf{q}_{j}^T]^T$, with $\mathbf{x}_{base} = [x_b, z_b, \theta_b]^T$ and $\mathbf{q}_j = [q_{1L}, q_{2L}, q_{3L}, q_{1R}, q_{2R}, q_{3R}]^T$. The variables $x_b$, $z_b$, $\theta_b$, and the elements of $\mathbf{q}_j$ are defined as illustrated in subplot (b). }
    \label{fig: lip and seven link robot} 
    \vspace{-0.1 in}
\end{figure}

This paper focuses on designing an adaptive control approach that ensures stable bipedal walking during unknown ground motions in both horizontal and vertical directions. 

To address the hybrid nature of walking, we explicitly consider the discrete events of foot touchdowns. The foot touchdown times are denoted $t_k$ indexed by $k\in\mathbb{N}$.
The instants immediately before and after touchdown are denoted $t_k^-$ and $t_k^+$, respectively.
Without loss of generality, the touchdown times are selected as $t_k = (k-0.5)T_s$,
where $T_s$ is the user-specified step period.

Based on the assumption that the mass of the robot is concentrated at the center of mass (CoM) and the angular momentum about CoM remains constant \cite{973365, iqbal2023asymptotic}, we model the CoM dynamics using the disturbed variable-length inverted pendulum, as shown in Fig.\ref{fig: lip and seven link robot}(a).
As illustrated in the figure, $x_{sc}(t)$ and $z_{sc}(t)$ are the known horizontal and vertical position of the CoM with respect to the support point $s$ that is fixed to the moving surface.

Using $\mathbf{x}(t) := [ x_{sc}(t), \dot{x}_{sc}(t)]^T$ to denote the state of the horizontal CoM dynamics, and applying Lagrange's method,
we obtain the disturbed variable-length inverted pendulum model as
\begin{equation}
\label{sys}
\begin{aligned} 
        \dot{\mathbf{x}}(t)
        =&
        \begin{bmatrix} 0 & 1 \\ 0 & 0  \end{bmatrix}
        \mathbf{x}(t)
        +
        \mathbf{B}_2
        f\left( t, \mathbf{x}(t) , \tau_a(t) \right),&& \text{if}~t \neq t_k^-
        \\
        \mathbf{x}(t_k^+) =& \mathbf{x}(t_k^-) - \mathbf{B}_1 u(t_k^-),&& \text{if}~t = t_k^-
\end{aligned}
\end{equation}
where 
$\mathbf{B}_1 = [1, 0]^T$,
$\mathbf{B}_2 = [0, 1]^T$, and
the function $f$ is
\begin{align*}
    \begin{split}
        f \left( t, \mathbf{x}(t) , \tau_a(t) \right)
        =
        \tfrac{ g + \ddot{z}_{ws}(t) }{ z_{sc}(t) } x_{sc}(t)
        -
        \ddot{x}_{ws}(t)
        -
        \tfrac{\tau_a(t)}{ m z_{sc}(t) }.
    \end{split}
\end{align*}
Here, $m$ is the mass of the robot and $g$ is the magnitude of the gravitational acceleration.
$\ddot{x}_{ws}(t)$ and $\ddot{z}_{ws}(t)$ are the bounded unknown accelerations of the support point $s$ with respect to an inertial world frame in horizontal and vertical directions, respectively. 
This equation reveals that $\ddot{x}_{ws}(t)$ induces non-homogeneous disturbance while $\ddot{z}_{ws}(t)$ causes parametric uncertainty.
The ankle torque $\tau_a(t)$ and the step length $u(t_k^-)$ are the control signals to be designed.

Suppose that the vertical position $z_{sc}(t)$ is regulated (e.g., by a full-dimensional robot controller~\cite{gao2023time,xiong20223}) to be within some bounded value $z_b$ of the desired constant vertical position $z_{sc}^d$ for all time; i.e.,  $|z_{sc}(t) - z_{sc}^d| < z_b < z_{sc}^d$ for all $t>0$. Note that the condition $z_b < z_{sc}^d$ is necessary to ensure $z_{sc}(t) > 0$ for all $t>0$.

Without loss of generality and for simplicity, we consider that the robot is initially standing at rest; i.e., the initial condition of the system is
$\mathbf{x}(0) = [0, 0]^T$ and $\mathbf{z}(0) = [ z_{sc}^d, 0 ]^T$, where $\mathbf{z}(t) =: [z_{sc}(t), \dot{z}_{sc}(t)]^T$ is a vector composed of the vertical position and velocity.

This study focuses on treating surface accelerations that are a sum of sinusoidal terms with time-varying characteristics; i.e., the disturbance induced by surface acceleration takes the following general form
\begin{align}
    \begin{split} \label{distAccel}
        \ddot{x}_{ws}(t)
        =&
        a_{x0}(t)
        +
        \sum_{i=1}^{n_x} a_{xi}(t) \sin{\left( \int_0^t \omega_{xi}(\tau)\partial\tau +\varphi_{xi} \right)}
        \\
        \ddot{z}_{ws}(t)
        =&
        a_{z0}(t)
        +
        \sum_{i=1}^{n_z} a_{zi}(t) \sin{\left( \int_0^t \omega_{zi}(\tau)\partial\tau +\varphi_{zi} \right)}
    \end{split}
\end{align}
where $n_x$, $n_z$ are the numbers of sinusoidal terms, 
$a_{x0}(t)$, $a_{z0}(t)$ are the bias terms,
$a_{xi}(t)$, $a_{zi}(t)$ are the disturbance amplitudes, 
$\omega_{xi}(t)$, $\omega_{zi}(t)$ are the disturbance frequencies, and 
$\varphi_{xi}$, $\varphi_{zi}$ are the disturbance phases.
Assume the parameters of the disturbance are slowly varying with respect to time; i.e., there exists a parameter $\mu\ge0$ such that 
$\sup_t |a_{x0}^{(k)}(t)| \le \mu$, 
$\sup_t |a_{z0}^{(k)}(t)| \le \mu$, 
$\sup_t |a_{xi}^{(k)}(t)| \le \mu$, 
$\sup_t |\omega_{xi}^{(k)}(t)| \le \mu$, 
$\sup_t |a_{zj}^{(k)}(t)| \le \mu$, 
$\sup_t |\omega_{zj}^{(k)}(t)| \le \mu$ 
for all derivative orders 
$k\in\mathbb{N}$ 
and all indices 
$i\in\{1,\dots,n_x\}$, $j\in\{1,\dots,n_z\}$.
Also assume the amplitudes are such that the disturbances are uniformly bounded i.e. there exists $\bar{x},\bar{z}<\infty$ such that $\sup_t|\ddot{x}_{ws}(t)| \le \bar{x}$ and $\sup_t|\ddot{z}_{ws}(t)| \le \bar{z}$.

To minimize the requested ankle torque, the desired CoM motion profile is designed to be a hyperbolic profile of a linear inverted pendulum model without ankle torque actuation. Specifically, the desired hyperbolic, piecewise continuous motion profile is the solution of the following hybrid system
\begin{equation}
\label{desProfile}
\begin{aligned} 
        \dot{\mathbf{x}}^d(t)
        =&
        \mathbf{A}_\lambda
        \mathbf{x}^d(t),&&\text{if}~t \neq t_k^-
        \\
        \mathbf{x}^d(t_k^+)
        =&
        \mathbf{x}^d(t_k^-)
        -
        \mathbf{B}_1
        T_sv^d,&&\text{if}~t = t_k^-
\end{aligned}
\end{equation}
where $\mathbf{x}^d(t) =: [x_{sc}^d(t), \dot{x}_{sc}^d(t)]^T$ is the desired trajectory of $\mathbf{x}(t)$,
and
$\mathbf{A}_\lambda = \begin{bmatrix} 0 & 1 \\ \lambda^2 & 0 \end{bmatrix}$ with 
$\lambda = \sqrt{\frac{g}{z_{sc}^d}}$.
To enforce the desired walking velocity $v^d$ as the average value of $\dot{x}_{sc}^d(t)$ within each walking step, the initial condition is set to
$\mathbf{x}^d(0) = [0, \frac{v^d T_s \lambda}{2 \sinh{(0.5 \lambda T_s)}}]^T$.

The control objective is to design the two control inputs, the ankle torque $\tau_a(t)$ and footstep length $u(t_k^-)$ ($k \in \mathbb{N}$), such that (a) the ankle torque is less than a known limit $\bar{\tau}_{a}$ and (b) the velocity tracking error $x^d_{sc}(t) - x_{sc}(t)$ converges to a small residual set as $t \rightarrow \infty$,
in the presence of the unknown disturbances $\ddot{x}_{ws}(t)$ and $\ddot{z}_{ws}(t)$.


\section{METHOD}
This section presents the proposed control approach that explicitly addresses unknown vertical and horizontal ground motions to achieve stable bipedal walking.

The proposed control approach is constructed based on the forced pendulum model in~\eqref{sys}, and comprises three primary elements: discrete footstep planner, continuous stabilizing controller, and adaptive controller.
The design of the proposed approach begins with the modification of the desired hybrid trajectory of the pendulum state, producing both a fictitious hybrid error system and a continuous actual error system.
A discrete footstep planner is then synthesized to asymptotically stabilize the fictitious system, while the continuous actual error system is treated by the continuous stabilizing and adaptive controllers.
The footstep planner also helps keep the required ankle torque small for the two continuous controllers.


\vspace{-0.05 in}
\subsection{Discrete Footstep Planner Based on Commanded State}

To introduce a continuous error system, we modify the desired state $\mathbf{x}^d(t)$ to generate a commanded state trajectory, $\mathbf{x}^c(t)$, which starts at the robot's true initial state and converges to the desired state trajectory, $\mathbf{x}^d(t)$. 
The use of $\mathbf{x}^c(t)$ prevents an overly large state error when the robot starts from rest and prevents jumps in the error state, $\mathbf{x}^c(t)-\mathbf{x}(t)$, immediately after the discrete foot touchdown event.
This lowers the demand for significant control input. 
In addition, this design choice makes the dynamics of $\mathbf{x}^c(t)-\mathbf{x}(t)$ completely continuous, which is the basis for the designs of the two proposed continuous controllers.

To drive the commanded state $\mathbf{x}^c(t)$ toward the desired state $\mathbf{x}^d(t)$ using the step length $u$, we design the dynamics of the commanded state $\mathbf{x}^c(t)$ as the following hybrid system
\begin{equation}
\begin{aligned}
\label{cmdProfile}
        \dot{\mathbf{x}}^c(t)
        =&
        \mathbf{A}_\lambda
        \mathbf{x}^c(t),
        && \text{if}~t \neq t_k^-
        \\
        \mathbf{x}^c(t_{k}^+)
        =&
        \mathbf{x}^c(t_{k}^-)
        -
        \mathbf{B}_1
        u(t_{k}^-), && \text{if}~t = t_k^-
\end{aligned}
\end{equation}
where $ \mathbf{x}^c(t) =: \begin{bmatrix} x_{sc}^c(t), \dot{x}_{sc}^c(t) \end{bmatrix}^T$, with the initial condition $\mathbf{x}^c(0) = \mathbf{x}(0)$.
Let the error between the desired profile and commanded profile be defined by
\begin{align} \label{cmdToDesError}
    \mathbf{e}^c(t) := \begin{bmatrix}
        e_{sc}^c(t) \\ \dot{e}_{sc}^c(t)
    \end{bmatrix} := \mathbf{x}^d(t) - \mathbf{x}^c(t)
\end{align}
To drive $\mathbf{x}^c(t)$ to $\mathbf{x}^d(t)$, we propose a Linear-Quadratic Regulator (LQR) step length controller, which drives the distance traveled over one step period to $T_s v^d$ and is described by
\begin{align}
    u(t_{k+1}^-) = T_sv^d - \mathbf{K} ( \mathbf{A}_s - \mathbf{I} ) \mathbf{e}^c(t_k^+) \label{LQRsteplengthcontrollaw}
\end{align}
where
$\mathbf{K} = (\mathbf{R} + \mathbf{B}_s^{T} \mathbf{P}_s \mathbf{B}_s )^{-1} \mathbf{B}_s^{T} \mathbf{P}_s \mathbf{A}_s$, 
$\mathbf{A}_s = \text{exp}( \mathbf{A}_\lambda T_s )$, and
$\mathbf{B}_s = (\mathbf{A}_s - \mathbf{I})
    \mathbf{B}_1$.
The matrix $\mathbf{P}_s$ is the solution to the discrete-time algebraic Ricatti equation
$\mathbf{P}_s =
        \mathbf{A}_s^{T} \mathbf{P}_s \mathbf{A}_s
        +
        \mathbf{Q}
        -
        \left( \mathbf{A}_s^{T} \mathbf{P}_s \mathbf{B}_s \right)
        \left( \mathbf{R} + \mathbf{B}_s^{T} \mathbf{P}_s \mathbf{B}_s \right)^{-1}
        \left( \mathbf{B}_s^{T} \mathbf{P}_s \mathbf{A}_s \right)$.
The matrices $\mathbf{R}>0$ and $\mathbf{Q}\ge0$ are the LQR parameters chosen for good performance.
Then we have the following result.
\begin{lemma} \label{lem:stepCtrl}
    The step lengths chosen according to (\ref{LQRsteplengthcontrollaw}) drives $\mathbf{e}^c(t) \to 0$ and $\mathbf{x}^c(t)\to \mathbf{x}^d(t)$ as $t\to\infty$.
\end{lemma}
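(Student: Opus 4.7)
The plan is to reduce the hybrid problem to a purely discrete-time stability problem at the touchdown instants and then show that the LQR gain in (\ref{LQRsteplengthcontrollaw}) renders the resulting closed-loop matrix Schur. Define $\mathbf{e}_k := \mathbf{e}^c(t_k^+)$. Because both $\mathbf{x}^d$ and $\mathbf{x}^c$ evolve under $\dot{\mathbf{x}} = \mathbf{A}_\lambda \mathbf{x}$ between touchdowns, $\mathbf{e}^c(t_{k+1}^-) = \mathbf{A}_s \mathbf{e}_k$; subtracting the two jump maps in (\ref{desProfile}) and (\ref{cmdProfile}) then yields $\mathbf{e}_{k+1} = \mathbf{A}_s \mathbf{e}_k + \mathbf{B}_1\bigl(u(t_{k+1}^-) - T_s v^d\bigr)$. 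Substituting (\ref{LQRsteplengthcontrollaw}) gives the autonomous recursion $\mathbf{e}_{k+1} = \mathbf{A}_{cl}\,\mathbf{e}_k$ with $\mathbf{A}_{cl} := \mathbf{A}_s - \mathbf{B}_1 \mathbf{K}(\mathbf{A}_s - \mathbf{I})$.

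The crux is that $\mathbf{A}_{cl}$ is not literally the canonical LQR closed-loop matrix $\mathbf{A}_s - \mathbf{B}_s \mathbf{K}$ whose Schur-stability is guaranteed by the Riccati construction. I would bridge this gap by the change of coordinates $\tilde{\mathbf{e}}_k := (\mathbf{A}_s - \mathbf{I})\mathbf{e}_k$. A direct computation, using $\mathbf{B}_s = (\mathbf{A}_s - \mathbf{I}) \mathbf{B}_1$ and the trivial commutation of $\mathbf{A}_s$ with $\mathbf{I}$, gives $(\mathbf{A}_s - \mathbf{I})\mathbf{A}_{cl} = (\mathbf{A}_s - \mathbf{B}_s \mathbf{K})(\mathbf{A}_s - \mathbf{I})$, so $\tilde{\mathbf{e}}_{k+1} = (\mathbf{A}_s - \mathbf{B}_s \mathbf{K})\tilde{\mathbf{e}}_k$. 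Since $\mathbf{A}_s = e^{\mathbf{A}_\lambda T_s}$ has eigenvalues $e^{\pm \lambda T_s} \ne 1$, the matrix $\mathbf{A}_s - \mathbf{I}$ is nonsingular, so $\mathbf{e}_k$ and $\tilde{\mathbf{e}}_k$ are linked by a similarity transform and decay together.

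At this point standard discrete-time LQR theory closes the argument: stabilizability of $(\mathbf{A}_s, \mathbf{B}_s)$ and detectability of $(\mathbf{A}_s, \mathbf{Q}^{1/2})$ can be verified by direct inspection of the explicit $2\times 2$ matrices, after which the Riccati solution $\mathbf{P}_s$ provides the Lyapunov function $V_k = \tilde{\mathbf{e}}_k^T \mathbf{P}_s \tilde{\mathbf{e}}_k$ that establishes exponential decay of $\tilde{\mathbf{e}}_k$, hence of $\mathbf{e}_k$. To promote convergence at the sampling instants to convergence for all $t$, I use $\mathbf{e}^c(t) = e^{\mathbf{A}_\lambda(t-t_k^+)}\mathbf{e}_k$ for $t\in(t_k^+, t_{k+1}^-]$ together with the uniform bound $\|e^{\mathbf{A}_\lambda(t-t_k^+)}\| \le \|e^{\mathbf{A}_\lambda T_s}\|$, yielding $\mathbf{e}^c(t) \to 0$ and therefore $\mathbf{x}^c(t) \to \mathbf{x}^d(t)$. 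The main obstacle is spotting the conjugation identity $(\mathbf{A}_s - \mathbf{I})\mathbf{A}_{cl} = (\mathbf{A}_s - \mathbf{B}_s\mathbf{K})(\mathbf{A}_s - \mathbf{I})$; without it, the extra factor $(\mathbf{A}_s - \mathbf{I})$ built into the control law hides the connection between $\mathbf{A}_{cl}$ and the LQR closed-loop matrix, and Schur-stability of $\mathbf{A}_{cl}$ is not evident by inspection.
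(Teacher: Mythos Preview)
Your proposal is correct and follows essentially the same route as the paper: both derive the discrete error recursion $\mathbf{e}^c(t_{k+1}^+)=\mathbf{A}_s\mathbf{e}^c(t_k^+)+\mathbf{B}_1(u(t_{k+1}^-)-T_sv^d)$, apply the similarity transform by $\mathbf{A}_s-\mathbf{I}$ (the paper calls the transformed state $\mathbf{y}_o$), note its invertibility from the eigenvalues $e^{\pm\lambda T_s}\neq 1$, and then invoke LQR stability for $(\mathbf{A}_s,\mathbf{B}_s)$ in the transformed coordinates. Your version is in fact a bit more explicit than the paper's, since you spell out the conjugation identity and the promotion from discrete to continuous-time convergence via the uniform bound on $e^{\mathbf{A}_\lambda(t-t_k^+)}$, which the paper leaves implicit.
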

\textit{Proof: see the appendix.}

\vspace{-0.05 in}
\subsection{Continuous Stabilizing Controller Based on Actual State}
Prior to presenting the proposed adaptive controller, we first introduce an ankle torque controller that stabilizes the dynamics of the actual pendulum state error.
This actual error is defined as the discrepancy between the current actual state and the commanded state; i.e.,
$\mathbf{e}(t) := \begin{bmatrix} e_{sc}(t) , \dot{e}_{sc}(t) \end{bmatrix}^T
    := \mathbf{x}^c(t) - \mathbf{x}(t)$.
Based on the dynamics of the actual (\ref{sys}) and the commanded state (\ref{cmdProfile}), the error system is expressed as
\begin{align*}
        \dot{\mathbf{e}}(t)
        =&
        \begin{bmatrix}
            0 & 1 \\ \frac{\ddot{z}_{ws}(t)}{z_{sc}(t)} & 0
        \end{bmatrix}
        \mathbf{e}(t)
        +
        \mathbf{B}_2
        h\left( t , e_{sc}(t) , \tau_a(t) \right),&&\text{if}~t \neq t_k^-
        \\
        \mathbf{e}(t_k^+) =& \mathbf{e}(t_k^-),&&\text{if}~t = t_k^-
\end{align*}
where
\begin{align*}
    \begin{split}
        h ( t, e_{sc}(t) , & \tau_a(t) )
        =
        \tfrac{\tau_a(t)}{m z_{sc}(t)}
        +
        \tfrac{g}{z_{sc}(t)}e_{sc}(t)
        \\
        +&
        \left(
        \tfrac{g}{z_{sc}^d}
        -
        \tfrac{g}{z_{sc}(t)}
        \right)
        x_{sc}^c(t)
        +
        \ddot{x}_{ws}(t)
        -
        \tfrac{ x_{sc}^c(t) }{z_{sc}(t)}
        \ddot{z}_{ws}(t)
    \end{split}
\end{align*}
Observe that the error system between the current state and the commanded state is no longer a hybrid system with a state jump across $t = t_{k}$.
This ensures the error observed by the adaptive law is continuous.

To stabilize the continuous error dynamics, we design the ankle torque command as a combination of feed-forward and proportional-derivative (PD) feedback control terms:
\begin{align} \label{pdCtrl}
\begin{split}
    \tau_a(t)
    =
    m z_{sc}(t) &
    \left[  \left(  - \tfrac{g}{z_{sc}(t)} - k_p\right)e_{sc}(t) - k_d \dot{e}_{sc}(t)  \right.
    \\
    - & \left. 
    \left( \tfrac{g}{z_{sc}^d} - \tfrac{g}{z_{sc}(t)} \right)x_{sc}^c
    +
    k_p v(t) \right]
\end{split}
\end{align}
where $k_p$ and $k_d$ are the PD gains and $v(t)$ is an input from the proposed adaptive controller to be introduced in the next subsection.
Then, the closed-loop error dynamics is given by
\begin{align} \label{pdClosedLoop}
    \dot{\mathbf{e}}(t)
    =&
    \left( \mathbf{A} + \mathbf{\Delta}(t) \right)
    \mathbf{e}(t)
    +
    \mathbf{B} \left( v(t) - d(t) \right)
\end{align}
where
$
\mathbf{A}
=
\begin{bmatrix}
    0 & 1 \\ -k_p & -k_d
\end{bmatrix}
$,
$
\mathbf{\Delta}(t)
=
\begin{bmatrix} 0 & 0 \\ \tfrac{\ddot{z}_{ws}(t)}{z_{sc}(t)} & 0 \end{bmatrix}
$, and
$
\mathbf{B}
= [0, k_p]^T
$. 
and the unknown input disturbance $d(t)$ is given by
\begin{align} \label{dist}
    d(t)
    =
    \tfrac{-1}{k_p}
    \left(
    \ddot{x}_{ws}(t)
    -
    \tfrac{ x_{sc}^c(t) }{z_{sc}(t)}
    \ddot{z}_{ws}(t)
    \right).
\end{align}

The following lemma states that the proposed ankle torque control (\ref{pdCtrl}) with $v(t)=0$ can be used to stabilize the continuous error system provided that the unknown disturbance $\ddot{z}_{ws}(t)$, which manifests as model uncertainty in the closed-loop system, is small enough not to destabilize the system.

\begin{lemma} \label{lem:stable}
    Assume $k_p, k_d > 0$ can be chosen such that 
    $\sup_t \left| \frac{\ddot{z}_{ws}(t)}{z_{sc}(t)} \right| \le \frac{\bar{z}}{z_{sc}^d-z_b} < \frac{1}{2\|\mathbf{L}\|}$,
    with $\mathbf{L} = \mathbf{L}^{T} >0$ the solution to the Lyapunov equation
    $\mathbf{A}^{T} \mathbf{L} + \mathbf{L} \mathbf{A} = -\mathbf{I}$, where $\mathbf{I}$ is an identity matrix of a proper dimension.
    Then the controller (\ref{pdCtrl}) with $v(t) = 0$ drives $\mathbf{e}(t)$ to a residual set whose size is uniformly bounded and proportional to $\limsup_{t\to\infty} \left| \ddot{x}_{ws}(t) - \frac{ x_{sc}^c(t) }{z_{sc}(t)}\ddot{z}_{ws}(t) \right| \le \bar{x} + \frac{0.5T_sv^d}{2\|\mathbf{L}\|} < \infty$ .
\end{lemma}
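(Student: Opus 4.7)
The plan is a standard Lyapunov / ISS-style argument on the continuous closed-loop error system (\ref{pdClosedLoop}) with $v(t)=0$, treating $\mathbf{\Delta}(t)$ as a small perturbation and the nonhomogeneous term $-\mathbf{B}d(t)$ as a bounded exogenous disturbance. First I would note that because $k_p,k_d>0$ make $\mathbf{A}$ Hurwitz, the Lyapunov equation $\mathbf{A}^T\mathbf{L}+\mathbf{L}\mathbf{A}=-\mathbf{I}$ has a unique $\mathbf{L}=\mathbf{L}^T>0$, so $V(\mathbf{e}):=\mathbf{e}^T\mathbf{L}\mathbf{e}$ is a valid candidate. Since the actual error system has $\mathbf{e}(t_k^+)=\mathbf{e}(t_k^-)$, no jump in $V$ occurs at touchdowns and it suffices to analyze $\dot V$ on the continuous intervals.

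Differentiating along (\ref{pdClosedLoop}) gives
\begin{align*}
    \dot V(t)
    = -\|\mathbf{e}(t)\|^2
    + 2\mathbf{e}(t)^T\mathbf{L}\mathbf{\Delta}(t)\mathbf{e}(t)
    - 2\mathbf{e}(t)^T\mathbf{L}\mathbf{B}d(t).
\end{align*}
Using $\|\mathbf{\Delta}(t)\|=|\ddot z_{ws}(t)/z_{sc}(t)|\le \bar z/(z_{sc}^d-z_b)<1/(2\|\mathbf{L}\|)$, the middle term is bounded by $2\|\mathbf{L}\|\|\mathbf{\Delta}(t)\|\|\mathbf{e}\|^2<\|\mathbf{e}\|^2$, so the quadratic part is strictly negative: $-\alpha\|\mathbf{e}\|^2$ for some $\alpha\in(0,1)$ independent of $t$. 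The last term is handled by Young's inequality, $|2\mathbf{e}^T\mathbf{L}\mathbf{B}d|\le(\alpha/2)\|\mathbf{e}\|^2+(2/\alpha)\|\mathbf{L}\mathbf{B}\|^2 d^2$, yielding $\dot V\le -(\alpha/2)\|\mathbf{e}\|^2+c\,d(t)^2$ for an explicit constant $c$. By a standard ultimate-bound argument, $\mathbf{e}(t)$ enters a residual set whose radius scales with $\limsup_{t\to\infty}|d(t)|$.

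The main obstacle, and the only non-routine step, is producing the uniform bound on $d(t)$ as stated, because $d(t)$ carries the factor $x_{sc}^c(t)/z_{sc}(t)$ which depends on the commanded trajectory. The $\ddot x_{ws}$ piece contributes at most $\bar x/k_p$ directly from the assumption. For the coupled piece I would invoke Lemma \ref{lem:stepCtrl}: $\mathbf{x}^c(t)\to\mathbf{x}^d(t)$, and the desired hyperbolic profile defined by (\ref{desProfile}) is symmetric about midstance on each step with average velocity $v^d$ over a period $T_s$, which gives $|x_{sc}^d(t)|\le 0.5\,T_sv^d$ on each step and hence $\limsup_{t\to\infty}|x_{sc}^c(t)|\le 0.5\,T_sv^d$. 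Combining this with $z_{sc}(t)\ge z_{sc}^d-z_b$ and the hypothesis $\bar z/(z_{sc}^d-z_b)<1/(2\|\mathbf{L}\|)$ yields
\begin{align*}
    \limsup_{t\to\infty}\left|\tfrac{x_{sc}^c(t)}{z_{sc}(t)}\ddot z_{ws}(t)\right|
    \le \tfrac{0.5\,T_sv^d}{2\|\mathbf{L}\|},
\end{align*}
which together with $\limsup|\ddot x_{ws}(t)|\le \bar x$ and the triangle inequality gives the claimed bound on $|\ddot x_{ws}-(x_{sc}^c/z_{sc})\ddot z_{ws}|$, and through $d(t)$ in (\ref{dist}) the stated residual set for $\mathbf{e}(t)$.
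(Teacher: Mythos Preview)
Your proposal is correct and follows essentially the same route as the paper: the quadratic Lyapunov function $V=\mathbf{e}^T\mathbf{L}\mathbf{e}$ from $\mathbf{A}^T\mathbf{L}+\mathbf{L}\mathbf{A}=-\mathbf{I}$, absorption of the $\mathbf{\Delta}(t)$ term via the hypothesis $\sup_t|\ddot z_{ws}/z_{sc}|<1/(2\|\mathbf{L}\|)$, an ISS-type ultimate-bound argument for the forcing term, and the use of Lemma~\ref{lem:stepCtrl} together with $\sup_t|x_{sc}^d(t)|=0.5T_sv^d$ to produce the stated bound on the residual disturbance. The only cosmetic difference is that you split the cross term with Young's inequality, whereas the paper bounds it directly as $2\|\mathbf{L}\||\mathbf{e}|\rho_0$ and then identifies the region where $\dot V\le -\epsilon V$; both yield the same residual-set conclusion.
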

\textit{Proof: see the appendix.}

\vspace{-0.05 in}
\subsection{Continuous Adaptive Controller}

The adaptive controller generating the control input $v(t)$ is comprised of a observer, a compensator, and an adaptive parameter estimator.
The objective of the adaptive controller is to reject the unknown disturbance $d(t)$ for the actual error dynamics in~\eqref{pdClosedLoop}.
The design of this controller is inspired by existing adaptive control ~\cite{stewart2024ltvSystem} for continuous systems that include the actual error dynamics in~\eqref{pdClosedLoop}.

\subsubsection{Observer design}
To build the observer, we introduce an observer state $\hat{\mathbf{e}}(t)$ whose dynamics does not directly depend on the disturbances \eqref{dist}, along with a time-varying variable $\zeta(t)$ that represents the effect of the disturbance (\ref{dist}) on the CoM position error $e_{sc}(t)$:
\begin{align}
    \begin{split} \label{observer}
        \dot{\hat{\mathbf{e}}}(t)
        =&
        \mathbf{A}\hat{\mathbf{e}}(t)
        +
        \mathbf{B}v(t) 
        \\
        \zeta(t)
        =& 
        \mathbf{C}
        \left( \mathbf{e}(t) - \hat{\mathbf{e}}(t) \right)
    \end{split}
\end{align}
where 
$\mathbf{C} = \begin{bmatrix} 1 & 0 \end{bmatrix}$.
Here, the disturbance-induced term $\zeta(t)$ is parameterized as~\cite{ioannou2012robust}:
\begin{align}
    \zeta(t) = \boldsymbol{\phi}^T(t)\boldsymbol{\theta}(t)
    +
    \kappa(t) \label{linearEstimator}
\end{align}
where $\boldsymbol{\phi}: \mathbb{R}^+\rightarrow \mathbb{R}^{n_{\phi}}$ is a regressor vector,
$\boldsymbol{\theta}: \mathbb{R}^+ \rightarrow \mathbb{R}^{n_{\phi}}$ is the unknown parameter vector with its $k^{\text{th}}$ ($k \in \{1,2,...,n_\phi\}$) element $\theta_k(t)$,
and $\kappa(t)$ is a bounded error term whose size is proportional to the parameter $\mu$ \cite{stewart2024ltvSystem}.

\subsubsection{Compensator design}
The observer in \eqref{observer} takes $v(t)$ as its input.
To construct $v(t)$ for ensuring a reasonable convergence rate of parameter estimation in practice, we utilize a parallel low-pass filter~\cite{jafari2016contMimo} in the form of
$\mathcal{L}[v(t)] = \sum_{k=1}^{n_\phi} \frac{\sigma^k}{(s+\sigma)^k}\mathcal{L}\left[\hat{\theta}_k(t)\zeta(t)\right](s)$.
Here, $\hat{\boldsymbol{\theta}}(t)$ is the estimated value of the unknown parameter $\boldsymbol{\theta}(t)$.
The bandwidth parameter $\sigma > 0$ and the compensator order $n_\phi$ are design parameters, which can be tuned for good performance. $n_\phi$ must be more than twice the number of sinusoidal components in the disturbance (\ref{dist}) plus 1.

The state-space form of the parallel low-pass filter that produces the control input $v(t)$ is given by
\begin{align}
    \begin{split} \label{compensator}
        \dot{\boldsymbol{\eta}}(t)
        =& 
        \mathbf{F}\boldsymbol{\eta}(t)
        +
        \hat{\boldsymbol{\theta}}(t)\zeta(t)
        \\
        v(t) =& \mathbf{H}\boldsymbol{\eta}(t)
    \end{split}
\end{align}
where 
$\boldsymbol{\eta} \in \mathbb{R}^{n_\phi}$ is the filter state, $\mathbf{H} := \sigma\begin{bmatrix} 1 & 0 & \cdots & 0 \end{bmatrix}$, and
$\mathbf{F} := \sigma(\mathbf{U} - \mathbf{I})$,
with $\mathbf{U}$ an upper diagonal shift matrix with 1's along the first upper diagonal and zeros everywhere else. 

\subsubsection{Parameter estimation}
To generate the regressor vector $\boldsymbol{\phi}(t)$ based on the known value of $\zeta(t)$, we utilize the following dynamical system~\cite{ioannou2012robust}
\begin{align}
    \begin{split}
        \begin{bmatrix}
            \dot{\mathbf{X}}(t) \\ \dot{\mathbf{Y}}(t)
        \end{bmatrix}
        =&
        \begin{bmatrix}
            \mathbf{A} & \mathbf{B}\mathbf{H} \\ \mathbf{0} & \mathbf{F}
        \end{bmatrix}
        \begin{bmatrix}
            \mathbf{X}(t) \\ \mathbf{Y}(t)
        \end{bmatrix}
        +
        \begin{bmatrix}
            \mathbf{0} \\ \zeta(t) \mathbf{I}
        \end{bmatrix}
        \\
        \boldsymbol{\phi}^T(t)
        =&
        \begin{bmatrix}
            \mathbf{C} & \mathbf{0}
        \end{bmatrix}
        \begin{bmatrix}
            \mathbf{X}(t) \\ \mathbf{Y}(t)
        \end{bmatrix}
    \end{split}
\end{align}
where $\mathbf{X} \in \mathbb{R}^{2\times n_\phi}$ and $\mathbf{Y} \in \mathbb{R}^{ n_\phi \times n_\phi}$ are matrix-valued states and $\mathbf{0}$ is a zero matrix/vector with a proper dimension.

The time variation of the unknown parameter vector $\boldsymbol{\theta}(t)$ motivates the generation of the parameter estimates $\hat{\boldsymbol{\theta}}(t)$ using the modified recursive-least-squares adaptive law with exponential forgetting, resetting, and projection described by
\begin{equation}
\label{updateLaw}
    \dot{\hat{\boldsymbol{\theta}}}(t)
        =
        \text{proj}\left( 
        \boldsymbol{\psi}(t)
        \right)
\end{equation}
where \text{proj}$(\cdot)$ represents a projection operator and $\boldsymbol{\psi}(t):=\alpha \mathbf{P}(t)\boldsymbol{\phi}(t)\epsilon(t)$.
Here,
$\mathbf{P}(t)$ is the covariance matrix,
$\epsilon(t)$ is the normalized estimation error designed as $\epsilon(t)
        =
        \tfrac{\zeta(t) - \boldsymbol{\phi}^T(t)\hat{\boldsymbol{\theta}}(t)}{1+\boldsymbol{\phi}^T(t)\mathbf{P}(t)\boldsymbol{\phi}(t)}$,
and the gain $\alpha > 0$ controls the learning rate (i.e., how fast $\hat{\boldsymbol{\theta}}$ is changed when the normalized estimation error $\epsilon(t)$ is non-zero and how fast the covariance matrix $\mathbf{P}(t)$ may go to zero in any direction).

The covariance matrix $\mathbf{P}(t)$ can be determined by solving
$
\dot{\mathbf{P}}(t)
        =
        -\alpha
        \frac{\mathbf{P}(t)\boldsymbol{\phi}(t)\boldsymbol{\phi}^T(t)\mathbf{P}(t)}{1+\boldsymbol{\phi}^T(t)\mathbf{P}(t)\boldsymbol{\phi}(t)}
        +
        \beta \mathbf{I} + \gamma \mathbf{P}(t) - \delta \mathbf{P}^2(t)
$,
where
the parameter $\beta > 0$ ensures adaptation does not stop by preventing $\mathbf{P}(t)$ from going to zero in any direction,
the forgetting factor $\gamma > 0$ helps account for the time variations of $\boldsymbol{\theta}(t)$ by controlling the rate at which past data is discounted,
and
the parameter $\delta > 0$ prevents $\mathbf{P}(t)$ from becoming unbounded in any direction.

This design of $\mathbf{P}(t)$ guarantees that
$0 < \mu_L \mathbf{I} \le \mathbf{P}(t) \le \mu_U \mathbf{I} < \infty$ for all $t > 0$,
with
$\mu_L = \tfrac{(\gamma-\alpha) + \sqrt{ (\gamma-\alpha)^2 + 4\delta\beta }}{2\delta}$
and
$\mu_U = \tfrac{\gamma + \sqrt{ \gamma^2 + 4\delta\beta }}{2\delta}$,
provided that the parameters are chosen such that
$0 < \mu_L < \mu_U < \infty$ 
and the initial value satisfies
$\mu_L \mathbf{I} \le \mathbf{P}(0) \le \mu_U \mathbf{I}$
 \cite{salgado1988efra,han1997efraComment}.

There are typically infinitely many choices of the parameter vector $\boldsymbol{\theta}(t)$ that satisfy (\ref{linearEstimator}), which implies $\boldsymbol{\phi}$ may not be persistently exciting and parameter drift may occur \cite{ioannou2012robust}.
To ensure $\hat{\boldsymbol{\theta}}(t)$ remains bounded, the projection function \text{prof}$(\boldsymbol{\psi})$ may be implemented as
\begin{align*}
    \begin{split}
        \text{proj}(\boldsymbol{\psi})
	    =&
	    \begin{cases}
            \boldsymbol{\psi} -\frac{\mathbf{P}(t)\hat{\boldsymbol{\theta}}(t)\hat{\boldsymbol{\theta}}^T(t)}{\hat{\boldsymbol{\theta}}^T(t)\mathbf{P}(t)\hat{\boldsymbol{\theta}}(t)}
            \boldsymbol{\psi}
            & \parbox{100pt}{ $|\hat{\boldsymbol{\theta}}(t)| = \bar{\theta}$ and
            $\hat{\boldsymbol{\theta}}^T \boldsymbol{\psi} > 0$}
	       \\
	       \boldsymbol{\psi} & \text{otherwise}
	   \end{cases}
    \end{split}
\end{align*}
The constant $\bar{\theta}$ defines the upper limit of the parameter vector's magnitude. $\bar{\theta}$ should be selected as large as feasible when prior knowledge of the disturbance frequencies is absent, ensuring that the unknown parameter vector $\boldsymbol{\theta}(t)$ remains bounded as $|\boldsymbol{\theta}(t)| < \bar{\theta}$ for all $t$.
Then we have the following result.

\begin{theorem} \label{thm:adaptiveCtrlPerformance}
    The inverted pendulum described by (\ref{sys}) with the controller described by (\ref{pdCtrl}) and (\ref{observer})-(\ref{updateLaw}) guarantees
    that $\hat{\boldsymbol{\theta}}(t)$, $\mathbf{P}(t)$, $\mathbf{P}^{-1}(t)$, and all signals in the closed-loop system are uniformly bounded.
    Furthermore, there exist $\bar{\mu}>0$ and $\bar{\varepsilon} \ge 0$ such that for all $\mu<\bar{\mu}$ we have
    $\limsup_{T\to\infty} \frac{1}{T} \int_t^{t+T}|x^d_{sc}(\tau) - x_{sc}(\tau)|^2 \partial\tau \le \rho_0\mu + \rho_1\bar{\varepsilon}$ 
    for some $\rho_0, \rho_1 \ge 0$.
\end{theorem}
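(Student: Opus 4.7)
The plan is to adapt the standard stability argument for continuous-time modified recursive least squares with projection and exponential forgetting (cf. Ioannou--Sun) to the present time-varying setting, exploiting the fact that by construction the actual error system \eqref{pdClosedLoop} is continuous across the touchdown events $t_k$. I would proceed in three stages: boundedness of the adaptive machinery, boundedness of the closed-loop signals, and finally the mean-square tracking bound.

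First, boundedness of $\mathbf{P}(t)$ and $\mathbf{P}^{-1}(t)$ follows directly from the EFRA-type update together with the admissibility conditions on $\alpha,\beta,\gamma,\delta$, which as stated just before the theorem enforce the invariant $\mu_L \mathbf{I} \le \mathbf{P}(t) \le \mu_U \mathbf{I}$. The projection keeps $|\hat{\boldsymbol{\theta}}(t)| \le \bar{\theta}$. Writing $\tilde{\boldsymbol{\theta}}(t):=\hat{\boldsymbol{\theta}}(t)-\boldsymbol{\theta}(t)$ and using \eqref{linearEstimator} to rewrite $\epsilon = (\boldsymbol{\phi}^T\tilde{\boldsymbol{\theta}}+\kappa)/(1+\boldsymbol{\phi}^T\mathbf{P}\boldsymbol{\phi})$, a Lyapunov function of the form $V=\tilde{\boldsymbol{\theta}}^T \mathbf{P}^{-1}\tilde{\boldsymbol{\theta}}$ yields, after standard manipulations and use of the projection inequality, that $\epsilon$ and $\epsilon\sqrt{1+\boldsymbol{\phi}^T\mathbf{P}\boldsymbol{\phi}}$ lie in $\mathcal{S}(\mu+\bar{\varepsilon})$, i.e.\ are small in the mean-square sense up to terms proportional to the slow-variation parameter $\mu$ (absorbed in $\kappa$) and any unmodeled residual $\bar{\varepsilon}$.

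Second, for signal boundedness I would analyze the interconnection of \eqref{pdClosedLoop}, \eqref{observer}, and \eqref{compensator}. Splitting $\mathbf{e}=\hat{\mathbf{e}}+\tilde{\mathbf{e}}$, the observer error $\tilde{\mathbf{e}}$ is driven by the disturbance $d(t)$ through the perturbed Hurwitz matrix $\mathbf{A}+\mathbf{\Delta}(t)$, which by the smallness condition of Lemma~\ref{lem:stable} is exponentially stable with a uniform decay rate; hence $\tilde{\mathbf{e}}$ and $\zeta=\mathbf{C}\tilde{\mathbf{e}}$ are bounded. A swapping-lemma computation identifies the parameterization \eqref{linearEstimator} exactly, with $\kappa(t)$ collecting the $\dot{\boldsymbol{\theta}}$ contributions and truncation residuals of the internal model, so that $\sup_t |\kappa(t)| = \mathcal{O}(\mu+\bar{\varepsilon})$. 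Since $\boldsymbol{\phi}$ is generated as the output of a stable linear filter driven by the bounded $\zeta$, $\boldsymbol{\phi}$ is bounded, which together with boundedness of $\hat{\boldsymbol{\theta}}$ makes $v=\mathbf{H}\boldsymbol{\eta}$ bounded via the Hurwitz filter \eqref{compensator}. Feeding this back into \eqref{pdClosedLoop} and using Lemma~\ref{lem:stable} closes the loop on $\mathbf{e}$; a Bellman--Gronwall / small-gain argument then certifies uniform boundedness of every closed-loop signal.

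Third, for the mean-square tracking bound I would use the decomposition $x_{sc}^d-x_{sc}=e_{sc}^c+e_{sc}$. By Lemma~\ref{lem:stepCtrl}, $e_{sc}^c(t)\to 0$ exponentially and contributes nothing to the $\limsup$ in Ces\`aro mean. For the $e_{sc}$ contribution, the identity $\zeta-\boldsymbol{\phi}^T\hat{\boldsymbol{\theta}} = -\boldsymbol{\phi}^T\tilde{\boldsymbol{\theta}}+\kappa$ combined with the $\mathcal{S}(\mu+\bar{\varepsilon})$ property of $\epsilon\sqrt{1+\boldsymbol{\phi}^T\mathbf{P}\boldsymbol{\phi}}$ and boundedness of $\boldsymbol{\phi}$ shows that $v(t)$ tracks the ideal disturbance-cancelling signal in the mean-square sense with residual of order $\mu+\bar{\varepsilon}$, so that $\frac{1}{T}\int_t^{t+T}|e_{sc}(\tau)|^2\, d\tau \le \rho_0 \mu + \rho_1 \bar{\varepsilon}$ for some constants $\rho_0,\rho_1\ge 0$, provided $\mu<\bar{\mu}$ with $\bar{\mu}$ small enough for the small-gain interconnection to hold. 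Combining the two contributions yields the theorem.

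The main obstacle I anticipate is the rigorous handling of three simultaneous sources of time variation: the slow drift $\dot{\boldsymbol{\theta}}(t)$, the state-dependent parametric uncertainty $\mathbf{\Delta}(t)$ induced by the unknown $\ddot{z}_{ws}$, and the hybrid resets of $\mathbf{x}^c$ (which propagate into $d(t)$ through $x_{sc}^c$ in \eqref{dist}). Converting the intuitive small-gain picture into a precise bound requires a careful choice of smallness thresholds on both $\mu$ and $\bar{z}/(z_{sc}^d - z_b)$ so that the compounded error $\kappa$ remains genuinely $\mathcal{O}(\mu)$ rather than $\mathcal{O}(1)$, and this is where the bulk of the work will lie.
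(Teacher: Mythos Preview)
Your Lyapunov/small-gain program (EFRA bounds on $\mathbf{P}$, projection on $\hat{\boldsymbol{\theta}}$, $V=\tilde{\boldsymbol{\theta}}^T\mathbf{P}^{-1}\tilde{\boldsymbol{\theta}}$ to make $\epsilon$ mean-square small of order $\mu+\bar{\varepsilon}$, then the split $x_{sc}^d-x_{sc}=e_{sc}^c+e_{sc}$ with Lemma~\ref{lem:stepCtrl} killing the first piece) is a legitimate route and is in fact more explicit about the adaptive machinery than the paper, which after establishing the disturbance structure simply invokes existing LTV disturbance-rejection results for the boundedness and mean-square conclusions.

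The gap is precisely the obstacle you name at the end, and it is the part the paper actually works out: you never establish that the parameterization (\ref{linearEstimator}) holds with a \emph{slowly varying} $\boldsymbol{\theta}(t)$ and a residual $\kappa$ of order $\mu+\bar{\varepsilon}$, and this is not automatic here. The offending term in (\ref{dist}) is $\tfrac{x_{sc}^c(t)}{z_{sc}(t)}\ddot{z}_{ws}(t)$; since $x_{sc}^c$ is a sawtooth-like hybrid signal oscillating on the step-period timescale $T_s$, the product $x_{sc}^c\,\ddot{z}_{ws}$ is \emph{not} a sum of sinusoids with amplitudes varying at rate $\mu$, so a naive swapping-lemma computation yields $|\dot{\boldsymbol{\theta}}|$ of order one and the mean-square bound collapses. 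The paper's resolution, which is essentially the entire technical content of its proof, is to write $x_{sc}^c=x_{sc}^d-e_{sc}^c$, expand the periodic limit $x_{sc}^d$ as a Fourier series $\sum_k\xi_k\sin(2\pi kt/T_s)$ with $|\xi_k|$ decaying like $1/k$, and apply product-to-sum identities so that $x_{sc}^d\,\ddot{z}_{ws}$ becomes a finite sum of sinusoids at the combination frequencies $\omega_{zi}\pm 2\pi k/T_s$ with genuinely slowly-varying amplitudes, plus a Fourier-truncation remainder. That remainder is exactly $\bar{\varepsilon}$, and the exponentially decaying $e_{sc}^c$ contribution is absorbed harmlessly. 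Once you insert this structural step, your direct argument and the paper's citation-based argument both go through.
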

\textit{Proof: see the appendix.}

\vspace{-0.05 in}
\section{CASE STUDY}
To validate the proposed control approach, we present a case study on a 2-D seven-link bipedal robot that is controlled to walk on a platform moving in both vertical and horizontal directions, as shown in Fig.\ref{fig: lip and seven link robot}(b).

The overall control framework for the actual robot employs three layers to deal with the complex full-order robot dynamics.
This framework structure has been widely used in legged robot control~\cite{xiong20223,gao2023time,gu2024walking,iqbal2023asymptotic}.
The high-level planner of the overall framework, i.e., the key novelty of this work presented in Section III, determines the ankle torque command and the desired step length based on the reduced-order pendulum model.
The middle level plans the desired task-space motion profile for the full-order robot model, which agrees with the command determined by the high-level controller.
The low-level controller produces the joint torque to track the desired task space profile provided by the middle level and execute the ankle torque command specified by the high level.




\vspace{-0.05 in}
\subsection{Full-Order Model of Robot Dynamics}
The proposed control framework design begins with full-order modeling.
Using Lagrange's equation, the full-order dynamics of the seven-link robot (Fig.\ref{fig: lip and seven link robot}(b)) is described by
\begin{align*}
    \mathbf{M}(\mathbf{q})\ddot{\mathbf{q}}
    +
    \mathbf{C}(\mathbf{q},\dot{\mathbf{q}})\dot{\mathbf{q}}
    +
    \mathbf{G}(\mathbf{q})
    =
    \mathbf{B}_a \tau_a
    +
    \mathbf{B}_b \boldsymbol{\tau}_b
    -
    \mathbf{J}^{T}(\mathbf{q}) \mathbf{F}
\end{align*}
where $\mathbf{q} \in\mathbb{R}^{9} $ is the generalized coordinates of the floating-base robot as illustrated in Figure~\ref{fig: lip and seven link robot},
$\mathbf{M}(\mathbf{q})$ is the joint-space inertia matrix, 
$\mathbf{C}(\mathbf{q},\dot{\mathbf{q}})\dot{\mathbf{q}}$ is the sum of the Coriolis and centrifugal terms, 
$\mathbf{G}(\mathbf{q})$ is the gravitational term, 
$\mathbf{J}(\mathbf{q})$ is the contact Jacobian, 
$\mathbf{F}\in\mathbb{R}^{3}$ are the contact forces and torque applied to the robot, 
$\tau_a\in\mathbb{R}$ is the torque provided by the ankle on the supporting leg,
$\boldsymbol{\tau}_b\in\mathbb{R}^{5}$ is the vector of torques applied by the rest of the motors, and
$\mathbf{B}_{a}$ and $\mathbf{B}_{b}$ are the input selection matrices for $\tau_a$ and $\boldsymbol{\tau}_b$, respectively.
We assume the robot stands on the moving surface without slipping, which means the robot motion satisfies the holonomic constraint
\begin{align*}
    \ddot{\boldsymbol{\Phi}}
    =
    \mathbf{J}(\mathbf{q})\ddot{\mathbf{q}}
    +
    \dot{\mathbf{J}}(\mathbf{q},\dot{\mathbf{q}})\dot{\mathbf{q}}
    =
    \mathbf{a}
\end{align*}
where
$\boldsymbol{\Phi} = \begin{bmatrix} x_{ak}, z_{ak}, \theta_{ak} \end{bmatrix}^T$ is the ankle pose and
$\mathbf{a} = [\ddot{x}_{ws}, \ddot{z}_{ws}, 0]^T$ is the acceleration of the supporting surface.
For notational brevity, we omit the time argument of parameters and variables in this section unless needed.

\vspace{-0.05 in}
\subsection{High-Level Controller}
 The high-level controller determines the ankle torque using the proposed adaptive controller while the step length command is computed based on the proposed footstep planner.  
 
 To simulate the effects of digital control implementation on hardware, we follow the methods in \cite{salgado1988efra, han1997efraComment, Ioannou2006tutorial} to discretize the adaptive controller (\ref{observer})-(\ref{updateLaw}) with a sampling period $T_{samp}$. Details are omitted for space consideration.

The power spectrum of the disturbance caused by surface motion is studied in \cite{tannuri2003estimating} and shows at most two peaks in the frequency range below 1 rad/s.
To prevent the compensator from attenuating $\zeta(t)$, which results in slow learning and noise amplification \cite{jafari2017overparameterized}, we select the bandwidth parameter $\sigma \gg 1$, the PD gains $k_p$, $k_d$ such that the system (\ref{pdClosedLoop}) has a bandwidth larger than 1, and a compensator order larger than twice the number of frequencies in the disturbance plus 1(i.e. $n_\phi \gg 5$).
Guidelines for tuning the parameters $\alpha$, $\beta$, $\gamma$, and $\delta$ are discussed in \cite{salgado1988efra,han1997efraComment}.
Specifically, we select $k_p = 25$, $k_d = 10$, $T_{samp} = 1/500$ s, $\sigma = 10$ rad/s, $n_\phi = 20$, and $\mathbf{P}(0) = 10^4 \mathbf{I}$.
The discrete counterparts of $\alpha$, $\beta$, $\gamma$, $\delta$, and $\bar{\theta}$ are chosen as
$ 6\times10^{-1}$, $10^{-3}$, $10^{-5}$, $10^{-6}$, $10^2$, respectively.

\vspace{-0.05 in}
\subsection{Mid-Level Motion Generation}
To ensure the seven-link robot closely emulates an inverted pendulum model and executes the commanded step length, the robot is expected to track a desired task space trajectory $ \mathbf{h}^{d} = [z_{sc}^{d}, \theta_{b}^{d}, x_{ak}^{d}, z_{ak}^{d}, \theta_{ak}^{d}]^T$. 
The values of the desired relative CoM height $z_{sc}^d$ and the desired trunk orientation $\theta_{b}^{d}$ are set to be constant to ensure the constant relative height and zero angular momentum assumptions of the reduced-order model are met.
$x_{ak}^{d}, z_{ak}^{d}$ are the desired horizontal and vertical positions of the swinging ankle. These two profiles are constructed using the $6^{\text{th}}$-order B\'ezier curve \cite{gao2023time} while $x_{ak}^{d}$ is subjected to changes according to the step length command and $z_{ak}^{d}$ is manually specified to ensure minimum impact when touchdown.
$\theta_{ak}^{d}$ is a constant value of the desired orientation of the swinging ankle to ensure flat foot touchdown with the ground.
While the robot has six motors, the dimension of $\mathbf{h}^{d}$ is only five, since the stance ankle motor is used to drive $x_{sc}$ to $x_{sc}^d$ using the adaptive control design.

\vspace{-0.05 in}
\subsection{Low-Level Controller}
The goal of the low-level controller is to drive the current task space position $ \mathbf{h}(\mathbf{q}) =: [z_{sc}, \theta_{b}, x_{ak}, z_{ak}, \theta_{ak}]^T$ to the desired task space position $ \mathbf{h}^{d}$ while ensuring the ankle torque $\tau_{a}$ is chosen according to (\ref{pdCtrl}). Using input-output linearizing control~\cite{iqbal2020provably}, we construct $\boldsymbol{\tau}_b$ as
\begin{align*}
    \begin{split}
        \boldsymbol{\tau}_b
        =
        \left(
        \mathbf{J}_{h}
        \mathbf{M}^{-1}
        \bar{\mathbf{B}}_b
        \right)
        (
        \ddot{\mathbf{h}}_d
        -
        \mathbf{K}_p \mathbf{y}
        -
        \mathbf{K}_d \dot{\mathbf{y}}
        -
        \dot{\mathbf{J}}_{h} \dot{\mathbf{q}}
        +
        \mathbf{J}_{h} \mathbf{M}^{-1} \bar{\mathbf{H}}_a )
    \end{split}
\end{align*}
where the output function $
\mathbf{y} := \mathbf{h}(\mathbf{q}) - \mathbf{h}^{d}
$ represents the task space tracking error,
$
\bar{\mathbf{H}}_a
=
\mathbf{H}_{a}
-
\mathbf{J}^T \boldsymbol{\Lambda} 
\mathbf{J} \mathbf{M}^{-1} \mathbf{H}_a
+
\mathbf{J}^T \boldsymbol{\Lambda} \dot{\mathbf{J}}\dot{\mathbf{q}}
$,
$
\mathbf{H}_{a}
=
\mathbf{C}(\mathbf{q},\dot{\mathbf{q}}) \dot{\mathbf{q}}
+
\mathbf{G}(\mathbf{q})
-
\mathbf{B}_{a} \tau_{a}
$,
$
\boldsymbol{\Lambda} 
=
\left(
\mathbf{J} \mathbf{M}^{-1} \mathbf{J}^{T}
\right) ^{-1}
$,
and
$
\bar{\mathbf{B}}_b
=
\left( \mathbf{I} - \mathbf{J}^T 
\boldsymbol{\Lambda} \mathbf{J} \mathbf{M}^{-1} \right)
\mathbf{B}_b
$.

This torque profile enforces the dynamics of the output function $\mathbf{y}$ to be a stable second-order system
$\ddot{\mathbf{y}} = 
- \mathbf{K}_d \dot{\mathbf{y}} 
- \mathbf{K}_p \mathbf{y}$
\cite{dosunmu2023stair}.
$\mathbf{K}_p$ and $\mathbf{K}_d$ are the feedback gains to be selected for sufficient output regulation performance.

\vspace{-0.05 in}
\subsection{Validation Results}
This subsection presents the simulation results of a seven-link robot walking on a moving platform.

To show the performance improvement of the proposed adaptive controller, we compare the proposed approach with (a) the HT-LIP stepping controller \cite{iqbal2023asymptotic} and (b) PD plus feed-forward control (PD+FF), as given in (\ref{pdCtrl}) with $v(t) = 0$.

Three surface motion profiles are utilized to test the performance of the proposed and the baseline controllers.
The motion profile of each case is summarized in Table~\ref{tab: motion profile}.
Case 1 is used to establish a performance baseline for each control strategy in the absence of a disturbance.
The purpose of Case 2 is to assess the controllers' effectiveness in the presence of periodic disturbances with frequency values in the ranges discussed in \cite{tannuri2003estimating} (i.e. less than 1 rad/s).
Case 3 allows the evaluation of the control approaches under disturbances with time-varying parameters.
In all cases, the desired velocity is $v^d = 0.2$ m/s, the step period is $T_s=0.5$ s, the desired vertical height is $z_{sc}^d = 0.74$ m, and the ankle torque limit is $\bar{\tau}_{a} = 40$ Nm.

\begin{table}[h!]
    \centering
    \caption{motion profile of dynamic ground}
        \vspace{-0.1 in}
    \resizebox{3.25in}{!}{%
    \begin{tabular}{|c|c|c|}
        \hline
        Case  & $x_{ws}(t)$ (m) & $z_{ws}(t)$ (m) \\ \hline
        1 & 0 & 0 \\ \hline
        2 & $0.2(1-\cos{(0.7t)})$ & $0.5(1-\cos{(0.4t)})$ \\ \hline
        3 & $0.004t^{2} \sin{(4t)} e^{-t/5}$ & $0.04(0.5\cos{(6t)} + \cos{(0.1t^2)} - 1.5)$ \\ \hline
    \end{tabular}
    \label{tab: motion profile}
    }
\end{table}

To quantitatively evaluate the performance of the controllers, we generate $N=7501$ data samples at the sample times $t_i=\frac{i-1}{500}$ s indexed by $i \in \{1,\dots,N\}$ and define the evaluation interval as $E=\{ t ~|~ 5s \le t \le 15s\}$.
The evaluation interval is chosen to start at $t=5$ s so that the controller transients have disappeared.
The following metrics are calculated and used:
\begin{enumerate}
    \item RMSE $ = \sum_{t_i \in E} \sqrt{ \frac{1}{M} |e_{sc}(t_i)|^2}$, where $M$ is the number of time samples $t_i$ in the evaluation interval $E$.
    \item PEAK $ = \max_{t_i \in E} |e_{sc}(t_i)|$.
    \item RMSE-PI $ = \sum_{t_i \in E} \sqrt{ \frac{1}{L} |e_{sc}(t_i^-)|^2 }$, where $L$ is the number of touchdown times $t_i$ within interval $E$.
    \item PEAK-PI $= \max_{t_i \in E} |e_{sc}(t_i^-)|$.
    \item TRQ $= \max_{i\in\{1,\dots,N\}} |\tau_a(t_i)|$. Note that TRQ is not applicable (NA) for the HT-LIP controller since HT-LIP does not employ ankle torque.
    \item FIT is the slope of the linear fit to $x_{s_{0}c}(t_i)$ for $t_i \in E$, where $x_{s_{0}c}(t)$ is the current horizontal position of CoM relative to the initial supporting point $s_{0}$.
\end{enumerate}
RMSE and PEAK describe the performance of the controllers over entire walking cycles while RMSE-PI and PEAK-PI describe the performance of the controller at the pre-impact time instants. 
TRQ and FIT are particularly important metrics for the robot walking problem.
The robot's ankle motors will not be able to deliver the demanded ankle torque if TRQ exceeds the ankle torque limit $\bar{\tau}_a$, and the robot achieves the desired average walking speed when the slope of the linear fit (FIT) equals the commanded velocity $v^d$.

\begin{table}[h]
\centering
\caption{Controller performance comparison}
    \vspace{-0.1 in}
\label{table:performance}
\renewcommand{\arraystretch}{1.5}
\resizebox{3.25in}{!}{%
\begin{tabular}{|c|c|c|c|c|c|c|c|}
\hline
Case               & 
Controller & 
RMSE & 
PEAK & 
RMSE-PI & 
PEAK-PI & 
TRQ & 
FIT
\\ \hline
\multirow{3}{*}{1} & HT-LIP & 8.19e-03 & 1.20e-02 & 4.79e-03 & 4.92e-03 & NA & 1.96e-01 \\
\cline{2-8}
& PD+FF & 3.52e-03 & 5.67e-03 & 5.28e-03 & 5.29e-03 & 1.46e+01 & 2.00e-01 \\
\cline{2-8}
& proposed & 1.51e-03 & 2.79e-03 & 2.18e-03 & 2.39e-03 & 1.39e+01 & 2.00e-01 \\
\hline
\multirow{3}{*}{2} & HT-LIP & 1.43e-02 & 4.00e-02 & 1.51e-02 & 2.49e-02 & NA & 1.75e-01 \\
\cline{2-8}
& PD+FF & 4.08e-03 & 9.08e-03 & 6.24e-03 & 9.15e-03 & 1.82e+01 & 2.00e-01 \\
\cline{2-8}
& proposed & 1.75e-03 & 4.17e-03 & 2.60e-03 & 4.00e-03 & 1.65e+01 & 2.00e-01 \\
\hline
\multirow{3}{*}{3} & HT-LIP & 4.39e-02 & 1.08e-01 & 6.42e-02 & 1.08e-01 & NA & 1.70e-01 \\
\cline{2-8}
& PD+FF & 1.38e-02 & 2.38e-02 & 1.38e-02 & 2.32e-02 & 4.33e+01 & 2.00e-01 \\
\cline{2-8}
& proposed & 3.09e-03 & 7.84e-03 & 2.57e-03 & 4.57e-03 & 3.43e+01 & 1.99e-01 \\
\hline
\end{tabular} 
}
\end{table}

Table~\ref{table:performance} summarizes the scores for the different combinations of controllers and cases. Note that the parameters of the baseline approaches are tuned for their best performance.
Figure \ref{fig:static} shows the performance for Case 1.
Figure \ref{fig:static} and Table \ref{table:performance} confirm that all controllers stabilizes the robot and achieves an average velocity close to the desired velocity.
The actuation provided by the ankle motor improves performance over the HT-LIP controller, and the adaptive controller further enhances the performance by compensating for the bias left behind by the PD+FF controller.
The bottom two plots in Figure \ref{fig:static} display how the different controllers adjust the control inputs to achieve the control objective.

\begin{figure}
    \centering
    \includegraphics[scale=0.9]{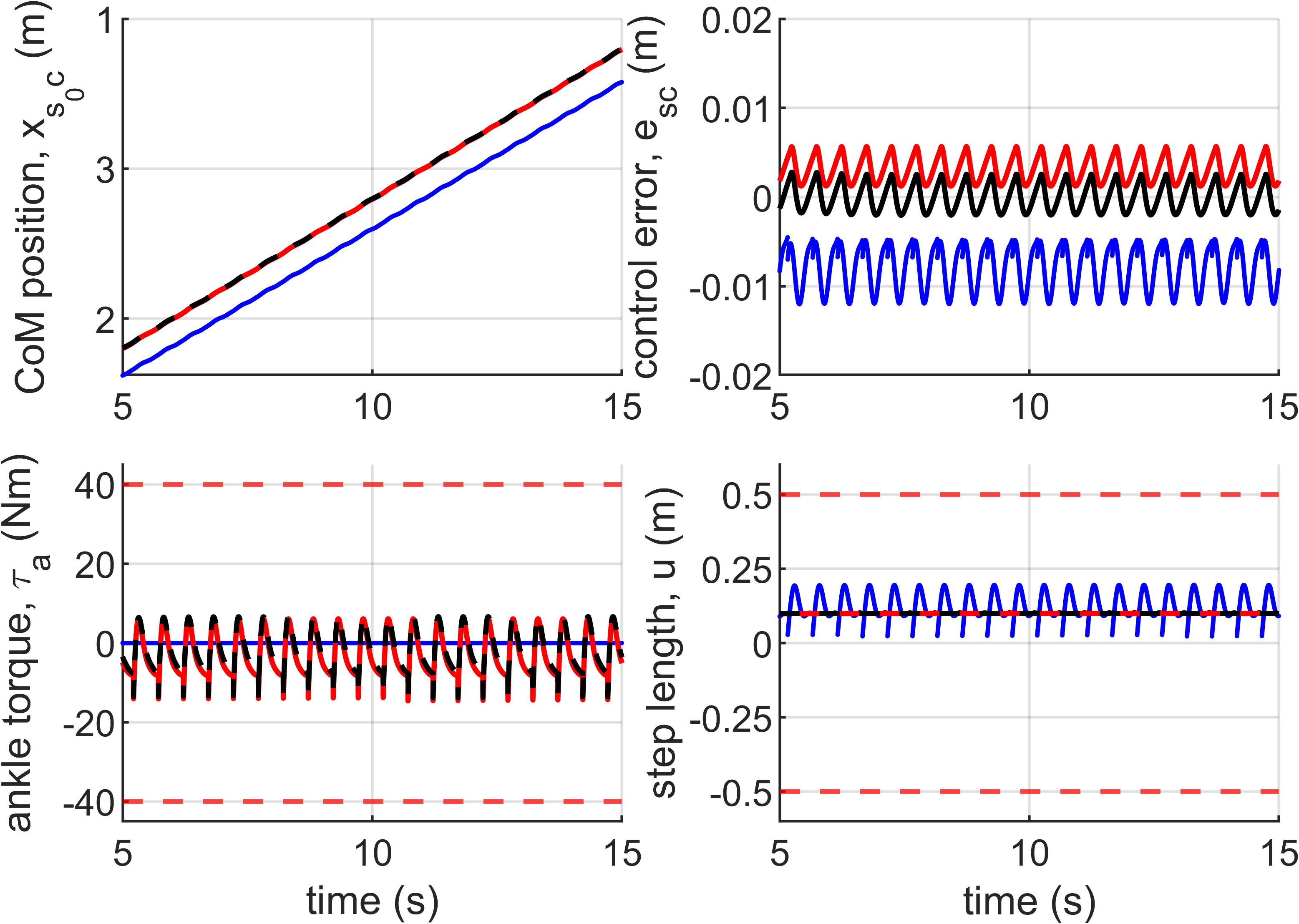}
    \vspace{-0.1 in}
    \caption{Performance of the HT-LIP (blue), PD+FF (red), and proposed (black) controllers in the absence of a disturbance under Case 1.}
    \label{fig:static}
\end{figure}

Figure \ref{fig:xcos_zcos} presents the results for Case 2.
The disturbance produces a periodic signature in Figure \ref{fig:xcos_zcos}.
The adaptive controller performs the best while the HT-LIP controller performs the worst, according to the metrics in Table \ref{table:performance}.

\begin{figure}
    \centering
    \includegraphics[scale=0.9]{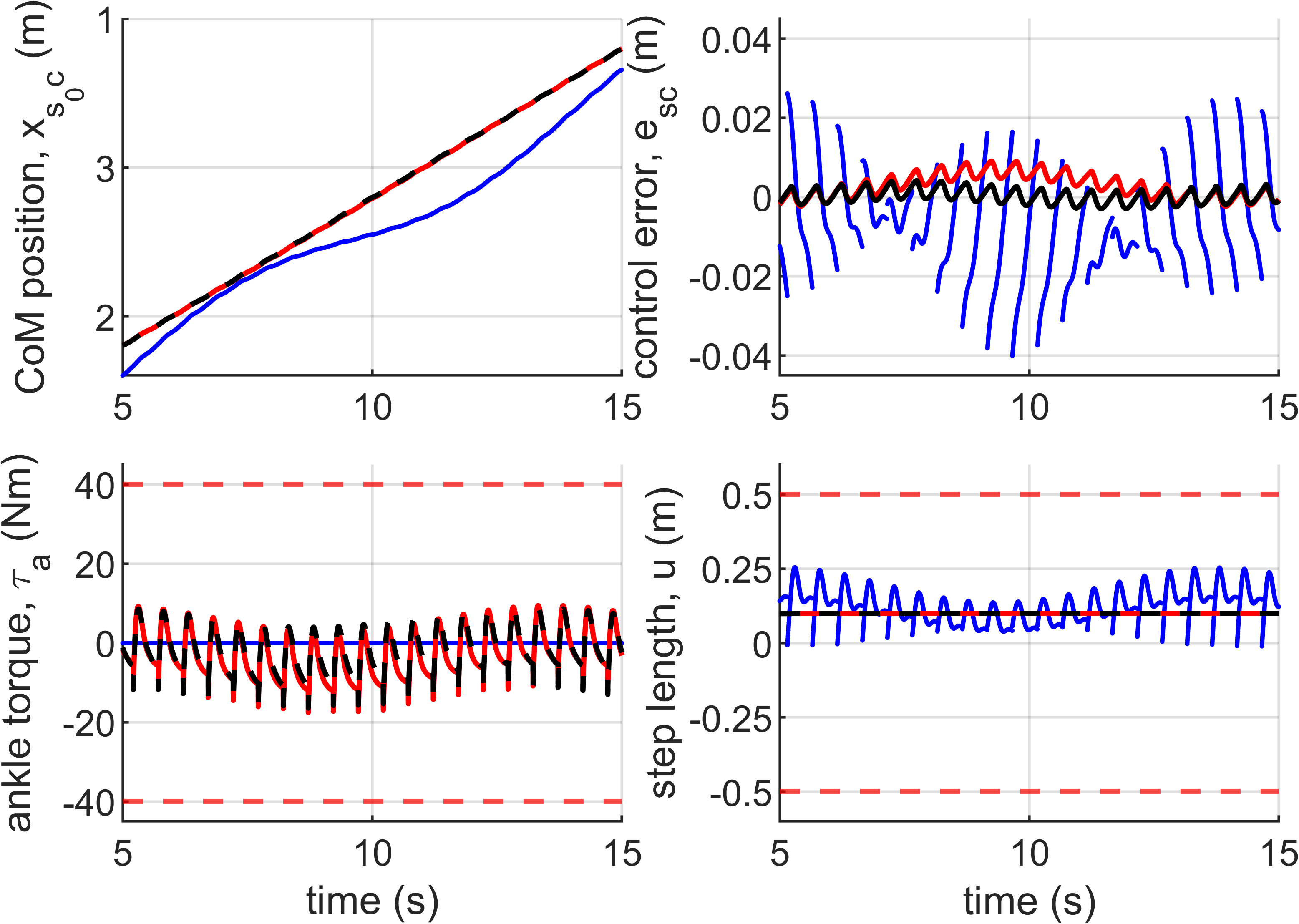}
    \vspace{-0.1 in}
    \caption{Performance of the HT-LIP (blue), PD+FF (red), and proposed (black) controllers for small amplitude disturbances under Case 2.}
    \label{fig:xcos_zcos}
    \vspace{-0.1 in}
\end{figure}



Figure \ref{fig:xz_nonper} illustrates the performance for Case 3.
The upper left plot of Figure \ref{fig:xz_nonper} and Table \ref{table:performance} indicate that the HT-LIP controller is unable to achieve the desired velocity.
The bottom left plot of Figure \ref{fig:xz_nonper} shows the PD+FF controller exceeds the ankle torque limit just prior to 10 s.
In contrast, the metrics in Table \ref{table:performance} confirm that the proposed controller not only respects the torque limit but also accurately tracks the desired position and velocity.

\begin{figure}
    \centering
    \includegraphics[scale=0.9]{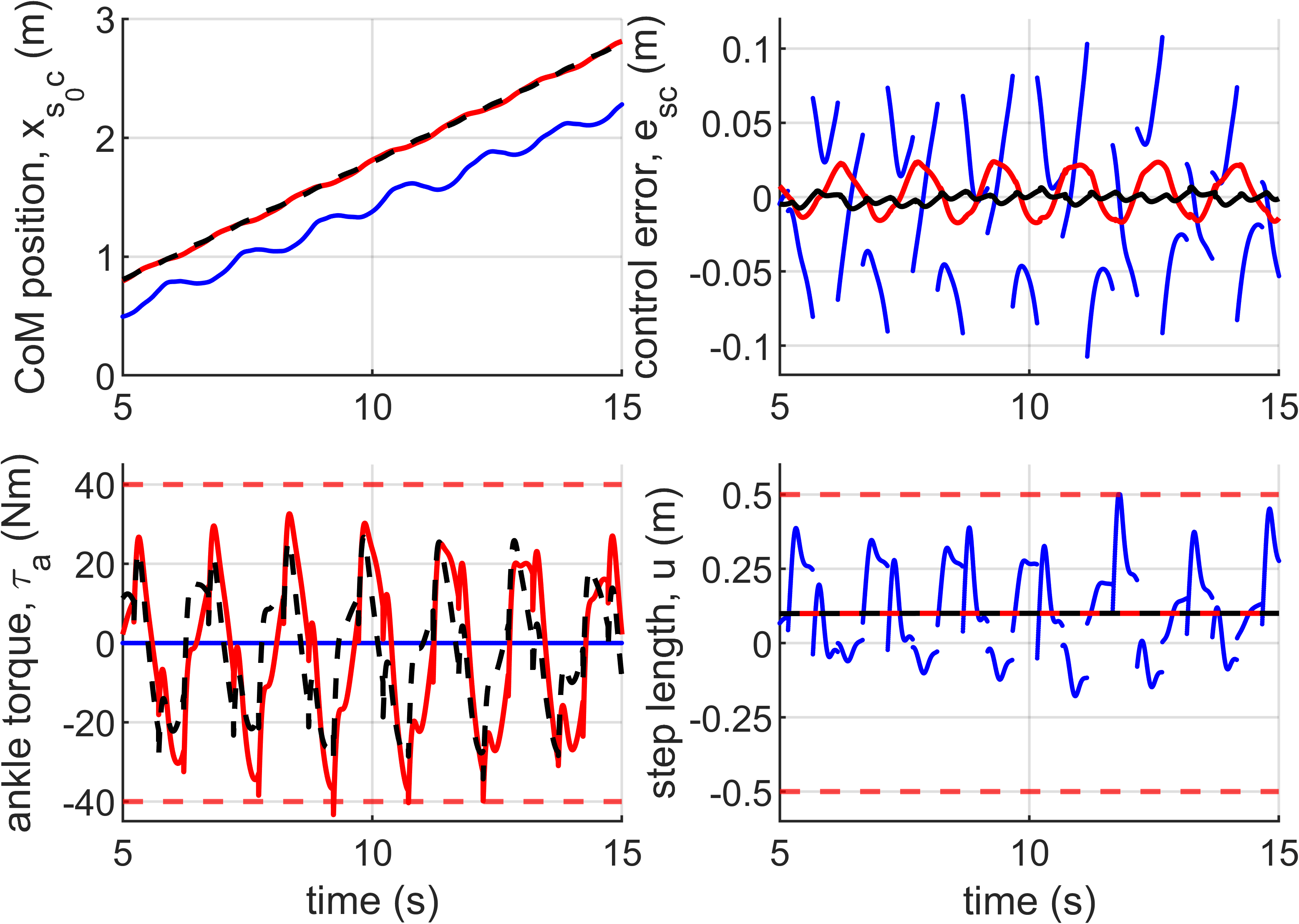}
    \vspace{-0.1 in}
    \caption{Performance of the HT-LIP (blue), PD+FF (red), and proposed (black) controllers for time-varying disturbances under Case 3.}
    \label{fig:xz_nonper}
        \vspace{-0.1 in}
\end{figure}

\section{CONCLUSIONS}

This paper has presented an adaptive control scheme to enable a bipedal robot to walk stably on a surface with unknown vertical and horizontal motions. 
By modifying the desired state trajectory, the hybrid error dynamics of the walking robot was first transformed into a continuous system, facilitating the use of adaptive control typically suited for continuous systems.
Adaptive ankle torque control was then formed to reject the unknown disturbances by addressing the continuous error system.
Additionally, a discrete footstep controller was synthesized to asymptotically stabilize the new error system, thus reducing the torque requirements of the ankle torque control.
Simulations on a seven-link biped demonstrated the superior capability of the proposed approach in achieving accurate position and respecting joint torque limits, compared to baseline methods. 
Future work will focus on extending this control scheme with a feedback law to adjust the actual step length online and conducting experimental validations on hardware.

\section*{APPENDIX}
\begin{proof}[Proof of Lemma \ref{lem:stepCtrl}]
To begin, observe that the desired profile (\ref{desProfile}) is such that the traveled distance over the step period $\Delta x^d =: x_{sc}^d(t_{k+1}^-) - x_{sc}^d(t_{k}^+)$ is equal to the desired average velocity times step period $T_s v^d$ and the boundary velocity difference $\Delta v^d =: \dot{x}^d_{sc}(t_{k+1}^-) - \dot{x}^d_{sc}(t_{k}^+)$ is zero; i.e.,
$[\Delta x^d, \Delta v^d]^T = \mathbf{A}_s \mathbf{x}^d(t_k^+) - \mathbf{x}^d(t_k^+) = (\mathbf{A}_s-I)\mathbf{x}^d(t_k^+) = [T_s v^d, 0]^T$
where $\mathbf{A}_s$ is the state-transition matrix of (\ref{desProfile}) evaluated over the $k^{th}$ step period; i.e.,
$
\mathbf{A}_s = \boldsymbol{\Phi}(t_{k+1}^- , t_k^+)
    =
    \exp( \mathbf{A}_{\lambda} T_s )
$.
Observe that the commanded profile (\ref{cmdProfile}) has the same state-matrix, $\mathbf{A}_\lambda$, as the desired profile (\ref{desProfile}); therefore, the traveled distance and the boundary velocity difference of the commanded profile $ [\Delta x^c, \Delta v^c]^T = \mathbf{A}_s \mathbf{x}^c(t_k^+) - \mathbf{x}^c(t_k^+) = (\mathbf{A}_s - \mathbf{I}) \mathbf{x}^c(t_k^+)$.
Then the commanded profile will converge to the desired profile if the step lengths $u(t_k^-)$ are designed to drive the traveled distance error and boundary velocity difference error to zero; i.e., regulate the output vector 
$\mathbf{y}_o(t_k^+) =: [\Delta x^d - \Delta x^c, \Delta v^d - \Delta v^c]^T$ 
where
$\mathbf{y}_o(t_k^+) = (\mathbf{A}_s - \mathbf{I}) \left( \mathbf{x}^d(t_k^+) - \mathbf{x}^c(t_k^+) \right) = (\mathbf{A}_s - \mathbf{I}) \mathbf{e}^c(t_k^+)$.
To obtain an equation in terms of the step lengths, observe that equations (\ref{desProfile}), (\ref{cmdProfile}), and (\ref{cmdToDesError}) may be used to show
\begin{align*}
    \begin{split}
        \dot{\mathbf{e}}^c(t) =& \mathbf{A}_\lambda \mathbf{e}^c(t)
        \\
        \mathbf{e}^c(t_k^+)
        =&
        \mathbf{e}^c(t_k^-)
        +
        \mathbf{B}_1
        \left( u(t_k^-) - T_sv^d \right)
    \end{split}
\end{align*}
These two equations may be combined to obtain
$ \mathbf{e}^c(t_{k+1}^+) = 
\mathbf{A}_s \mathbf{e}^c(t_k^+)
+ \mathbf{B}_1 \left( u(t_{k+1}^-) 
- T_sv^d \right)$.
Next, note that $\mathbf{A}_s - \mathbf{I}$ is invertible for all $T_s>0$ and 
$(\mathbf{A}_s - \mathbf{I}) \mathbf{A}_s (\mathbf{A}_s - \mathbf{I})^{-1} = \mathbf{A}_s$, 
so the dynamics of $\mathbf{e}^c(t_k^+)$ may be written as $\mathbf{y}_o(t_{k+1}^+) = \mathbf{A}_s \mathbf{y}_o(t_k^+) + \mathbf{B}_s \left( u(t_{k+1}^-) - T_sv^d \right)$ 
where $\mathbf{B}_s = (\mathbf{A}_s - \mathbf{I}) \mathbf{B}_1$.
Observe that the pair $( \mathbf{A}_s , \mathbf{B}_1 )$ is controllable for all $T_s>0$.
As a result, the pair $(\mathbf{A}_s, \mathbf{B}_s)$ will be controllable because it is related to $(\mathbf{A}_s, \mathbf{B}_1)$ by the similarity transform $\mathbf{A}_s - \mathbf{I}$ \cite{hespanha2018linear}.
Then  choosing the step lengths according to the LQR control law (\ref{LQRsteplengthcontrollaw}) will stabilize the system and drive $\mathbf{x}^c \to \mathbf{x}^d$ \cite{hespanha2018linear}.
\end{proof}

\begin{proof}[Proof of Lemma \ref{lem:stable}]
    To begin, consider the Lyapunov candidate $V = \mathbf{e}^T \mathbf{L} \mathbf{e}$.
    Taking the derivative of $V$ and using (\ref{pdClosedLoop}) with $v(t) = 0$ yields
    \begin{align*}
        \dot{V}
        \le&
        -\left( 1 - 2\left|\tfrac{\ddot{z}_{ws}}{z_{sc}}\right|\| \mathbf{L} \| \right) |\mathbf{e}|^2
        +
        2 \| \mathbf{L} \| |\mathbf{e}| \left| \ddot{x}_{ws} - \tfrac{ x_{sc}^c }{z_{sc}}\ddot{z}_{ws} \right|
        \\
        \le&
        -\left( 1 - 2\rho_1\| \mathbf{L} \| \right) |\mathbf{e}|^2
        +
        2 \| \mathbf{L} \| |\mathbf{e}| \rho_0
    \end{align*}
    where 
    $\rho_0 = \sup_t \left| \ddot{x}_{ws}(t) - \frac{ x_{sc}^c(t) }{z_{sc}(t)}\ddot{z}_{ws}(t) \right|$ 
    and 
    $\rho_1 = \sup_t \left| \frac{\ddot{z}_{ws}(t)}{z_{sc}(t)} \right|$.
    Observe that the assumption $\sup_t \left| \frac{\ddot{z}_{ws}(t)}{z_{sc}(t)} \right| < \frac{1}{2\| \mathbf{L} \|}$ implies $1 - 2\rho_1\| \mathbf{L} \| > 0$, so we may follow the approach used in \cite{isidori2017lectures} to show the error will converge exponentially fast to a residual set.
    Assume we would like to enforce $\dot{V} \le -\epsilon V$ for some $\epsilon \in \left( 0 , 1/\| \mathbf{L} \| - 2\rho_1 \right)$.
    This condition will be satisfied if
    $-\left( 1 - 2\rho_1\| \mathbf{L} \| \right) |\mathbf{e}|^2 + 2 \| \mathbf{L} \| |\mathbf{e}| \rho_0 \le -\epsilon \| \mathbf{L} \| |\mathbf{e}|^2 \le -\epsilon V$.
    Rearranging and dividing by $\left( 1 - 2\rho_1\| \mathbf{L} \| -\epsilon \| \mathbf{L} \| \right)|\mathbf{e}|$ yields
    $\frac{2\| \mathbf{L} \|\rho_0}{1 - 2\rho_1\| \mathbf{L} \| -\epsilon \| \mathbf{L} \|} \le |\mathbf{e}| $
    Therefore, $\dot{V} \le -\epsilon V$ for all $|\mathbf{e}| \ge \frac{2\| \mathbf{L} \|\rho_0}{1 - 2\rho_1\| \mathbf{L} \| -\epsilon \| \mathbf{L} \|}$ and $\epsilon \in \left( 0 , 1/\| \mathbf{L} \| - 2\rho_1 \right)$, which implies that the system will converge exponentially fast to the residual set where
    $|\mathbf{e}| \le \frac{2\| \mathbf{L} \|\rho_0}{1 - 2\rho_1\| \mathbf{L} \|}$ \cite{isidori2017lectures}.
    In the limit as $t\to\infty$ this becomes
    $|\mathbf{e}| \le \frac{2\| \mathbf{L} \|}{1 - 2\rho_1\| \mathbf{L} \|}\left(\bar{x}+\frac{0.5T_sv^d}{2\|\mathbf{L}\|}\right)$ because Lemma \ref{lem:stepCtrl} shows $x_{sc}^c\to x_{sc}^d$ as $t\to\infty$, which implies $\limsup_{t\to\infty}|x_{sc}^c(t)|=\sup_t|x_{sc}^d(t)|=0.5T_sv^d$.
    This condition becomes $|\mathbf{e}|\le0$ in the absence of a disturbance thus the error will converge to zero exponentially fast in the absence of a disturbance and model uncertainty.
\end{proof}

\begin{proof}[Proof of Theorem \ref{thm:adaptiveCtrlPerformance}]
    To begin, we show the disturbance (\ref{dist}) may be modeled as a sum of sinusoidal components with time-varying parameters plus an arbitrarily small model error and a decaying term.
    Observe that the desired profile (\ref{desProfile}) has the solution
    $x_{sc}^d(t) = \tfrac{T_s v^d}{2 \sinh (0.5\lambda T_s) } \sinh (\lambda s)$
    where 
    $s = \text{mod}(t-0.5T_s,T_s)$.
    The Fourier series expansion \cite{Korner_1988} of the desired profile is
    $x_{sc}^d(t) = \sum_{k=1}^{\infty}\xi_k \sin\left( \tfrac{2\pi k}{T_s} t \right)$
    where the amplitudes are
    $\xi_k = \tfrac{4\pi v^d T_s k }{\lambda^2 T_s^2 + 4\pi^2k^2} \cos(\pi (k-1))$.
    Separating $n_d$ terms of the Fourier series of $x_{sc}^d(t)$ yields
    $\ddot{z}_{ws}(t) x_{sc}^c(t)
    = 
    \ddot{z}_{ws}(t)e_{sc}^c(t) 
    - 
    \ddot{z}_{ws}(t)\sum_{k=1}^{n_d}\xi_k \sin\left( \tfrac{2\pi k}{T_s} t \right)
    -
    \ddot{z}_{ws}(t)\sum_{k=n_d+1}^{\infty}\xi_k \sin\left( \tfrac{2\pi k}{T_s} t \right)$.
    This result may be used with (\ref{distAccel}) and the trigonometric product to sum identities to obtain 
    $d(t) =
    \tfrac{1}{k_p}\ddot{x}_{ws}(t)
    -
    \tfrac{1}{k_p}\sum_{k=1}^{n_d}\tfrac{a_{z0}(t)}{z_{sc}(t)} \xi_k \sin\left( \tfrac{2\pi k}{T_s}t \right)
    -
    \sum_{k=1}^{n_d}\sum_{i=1}^{n_z} \tfrac{\xi_k a_{zi}(t)}{2 k_p z_{sc}(t)} \left( c^-_{ik}(t) - c^+_{ik}(t) \right)
    +
    \tfrac{\ddot{z}_{ws}(t)}{k_pz_{sc}(t)}e^c_{sc}(t)
    +
    \varepsilon(t)$
    where 
    $\varepsilon(t) = \tfrac{\ddot{z}_{ws}(t)}{z_{sc}(t)k_p}\sum_{k=n_d+1}^{\infty}\xi_k \sin\left( \tfrac{2\pi k}{T_s} t \right)$, 
    $c^-_{ik}(t) = \cos\left( \int_0^t \left(\omega_{zi}(\tau) - \tfrac{2\pi k}{T_s}\right) \partial\tau + \varphi_{zi} \right)$, and
    $c^+_{ik}(t) = \cos\left( \int_0^t \left(\omega_{zi}(\tau) + \tfrac{ 2\pi k}{T_s} \right)\partial\tau + \varphi_{zi} \right)$.
    $n_d$ may be made large enough to satisfy the bound $\bar{\varepsilon}$ because the magnitude of $\xi_k \to 0$ like $1/k$ as $k\to\infty$.
    Next, observe that the system (\ref{pdClosedLoop}) may be described using input-output operators \cite{ioannou1993ltvSystems} as
    $e_{sc}(t) = G(s,t)[v(t)]-G(s,t)[d(t)]$
    where the plant model $G(s,t)$ is given by
    $G(s,t) = \left( s^2 + k_ds + k_p + \tfrac{\ddot{z}_{ws}(t)}{z_{sc}(t)} \right)^{-1}k_p$
    and $s=\partial/\partial t$ is the differential operator.
    The internal model \cite{nikiforov1998adaptive,liu2009parameter,marino2002global} of the sinusoidal components of the disturbance will exist because the parameters of the sinusoidal components are assumed to be slowly varying and we may make the frequencies distinct and amplitudes non-zero by combining terms with the same frequencies and eliminating terms with zero amplitude \cite{stewart2024ltvSystem}.
    Then the disturbance rejection problem will be solvable because the internal model of the disturbance exists and the plant model $G(s,t)$ is of the form $G(s,t)=R^{-1}(s,t)Z(s,t)$ where $R(s,t)$ is a stable polynomial differential operation and $Z(s,t)$ is a constant \cite{stewart2024ltvSystem}.
    The proposed adaptive scheme is robust to the bounded uncertainty $\varepsilon(t)$ and can use the simplified LTI model $G_0(s)= (s^2 + k_ds + k_p)^{-1}k_p$ which omits the unknown acceleration $\ddot{z}_{ws}(t)$ \cite{jafari2016contMimo}.
    As a result of these observations, we conclude there exists constants $\bar{\mu}>0$ and $\rho_0,\rho_1>0$ such that for all $\mu < \bar{\mu}$ the output $e_{sc}(t)$ satisfies
    $\limsup_{T\to\infty} \tfrac{1}{T}\int_t^{t+T} |e_{sc}(\tau)|^2\partial \tau \le \rho_0 \mu + \rho_1\bar{\varepsilon}$.
    From Lemma \ref{lem:stepCtrl} $x^c\to x^d$, so $\limsup_{T\to\infty} \tfrac{1}{T}\int_t^{t+T} |x_{sc}^d(\tau) - x_{sc}(\tau)|^2\partial \tau \le \rho_0 \mu + \rho_1\bar{\varepsilon}$.
\end{proof}


\bibliographystyle{IEEEtran}
\bibliography{Reference.bib}


\addtolength{\textheight}{-12cm}   


\end{document}